\DeclareMathOperator{\A}{\mathrm{A}}
\DeclareMathOperator{\V}{\mathrm{V}}
\DeclareMathOperator{\E}{\mathrm{E}}
\DeclareMathOperator{\Z}{\mathrm{Z}}
\DeclareMathOperator{\X}{\mathrm{X}}
\DeclareMathOperator{\G}{\mathrm{G}}
\DeclareMathOperator{\I}{\mathrm{I}}
\DeclareMathOperator{\Y}{\mathrm{Y}}
\DeclareMathOperator{\Dsym}{\mathrm{D}^{-\frac{1}{2}}}
\newcommand\gcat[0]{GCat}
\DeclareMathOperator{\mi}{\mathrm{I}}
\DeclareMathOperator{\ent}{\mathrm{H}}
\DeclareMathOperator{\AX}{\mathrm{AX}}
\newcommand\independent{\protect\mathpalette{\protect\independenT}{\perp}}
\def\independenT#1#2{\mathrel{\rlap{$#1#2$}\mkern2mu{#1#2}}}
\theoremstyle{definition}
\newtheorem{thm}{Theorem}
\newtheorem{lem}{Lemma}
\newtheorem{defn}{Definition}
\pgfplotsset{
  ylabel style={overlay},
  yticklabel style={overlay},
}
\pgfplotsset{every tick label/.append style={font=\tiny}}
\begin{document}
\title{Demystifying Graph Convolution \\with a Simple Concatenation}
%
%
\author{%
  Zhiqian Chen \\
  Department of Computer Science and Engineering \\
  Mississippi State University\\
  zchen@cse.msstate.edu \\
  \And
  Zonghan Zhang\\
  Department of Computer Science and Engineering\\
  Mississippi State University\\
  \texttt{zz239@msstate.edu}
  }
%
%
%
\maketitle              
\begin{abstract}
Graph convolution (GConv) is a widely used technique that has been demonstrated to be extremely effective for graph learning applications, most notably node categorization. On the other hand, many GConv-based models do not quantify the effect of graph topology and node features on performance, and are even surpassed by some models that do not consider graph structure or node properties. We quantify the information overlap between graph topology, node features, and labels in order to determine graph convolution's representation power in the node classification task. In this work, we first determine the linear separability of graph convoluted features using analysis of variance. Mutual information is used to acquire a better understanding of the possible non-linear relationship between graph topology, node features, and labels. Our theoretical analysis demonstrates that a simple and efficient graph operation that concatenates only graph topology and node properties consistently outperforms conventional graph convolution, especially in the heterophily case. Extensive empirical research utilizing a synthetic dataset and real-world benchmarks demonstrates that graph concatenation is a simple but more flexible alternative to graph convolution.

\end{abstract}
\section{Introduction}
While Graph Neural Networks (GNNs) dominate graph learning tasks, they are generally black-box models with no explanations for their output, making their applications untrustworthy and hampering their growth into other domains~\cite{zhou2018graph,zhang2018deep,wu2019comprehensive,bronstein2017geometric}. As a result, numerous articles proposed explainable methods for determining the underlying mechanism of GNNs via post-hoc tactics~\cite{yuan2020explainability,pope2019explainability,ying2019gnnexplainer}, enabling statistical study of GNN behaviors. The bulk of post-hoc procedures are structured around the framework of sensitivity analysis~\cite{saltelli2004sensitivity} and perturbation test~\cite{kato2013perturbation} framework. Post-hoc models' fundamental drawback is their lack of theoretical basis, which might result in an instable explanation. Another related issue is to improve our understanding of GNNs through examination of their kernels and interpretation of their graph operations utilizing a variety of theoretical foundations~\cite{xu2018powerful,oono2019graph}.

The majority of existing research, on the other hand, does not quantify the effect of the input (i.e., graph topology and node attributes) on the output (i.e., node-level prediction), which continues to conceal GNNs in the learning process. Inadequate knowledge of such quantitative analysis may result in \textbf{(1)} confusion caused by irrelevant input/dimensions, wasteful and meaningless experiments, as well as \textbf{(2)}considerable computing overhead or performance reduction caused by redundant target features. To overcome this constraint, we quantify the overlap between the input and output using analysis of variance (ANOVA)~\cite{st1989analysis} and information theory~\cite{brillouin2013science}.

In this article, we will discuss the most often used GNN, graph convolutional (GConv) networks \cite{GCN}. As graph concatenation, we suggest concatenating the graph topology and node features (GCat) directly. 
Our goals are to increase the transparency of the graph convolution process by comparing GCat and GConv in homophily and heterophily circumstances, as well as to determine whether graph convolution is necessary given the input graph and node properties. 
We examine the theoretical relationship between graph convolution and graph concatenation from two perspectives: \textbf{linear dependency}, in which we use variance analysis and the Fisher score (F-test) to determine the linear separability of the input features and labels, and \textbf{non-linear dependency}, in which we quantify the information overlap between the inputs (graph topology and node features) and the output. We show that GCat not only has a higher linear separability than GConv, but also captures more information that is overlapping between the input and output in the node classification job. Additionally, because matrix concatenation is faster than matrix multiplication, GCat is more efficient than graph convolution. 
Extensive studies using synthetic data validates our inferences regarding linear and nonlinear connections in all conceivable contexts. Additionally, investigations on benchmark datasets reveal that graph concatenation outperforms state-of-the-art GNNs while being more computationally efficient.

\section{Quantifying Graph Convolution}
In this part, we define graph convolution and graph concatenation in the context of the node classification problem. We then establish their link using linear separability and mutual information.

\vspace{2pt}\noindent\textbf{Problem Setup and Preliminaries.} 
Consider a graph $\G=\{\V, \E, $W$\}$, where $\V$ stands for a set of $N$ nodes, $\E$ represents links among the nodes and $W=[w_{ij}]^{N\times N}$ is the adjacent matrix. $\X$ is the feature matrix of nodes, and $\Y$ is labels of them. Each node $v_{i}$ has a feature vector which is the i-th row of $\X$. 
In this paper, our goal is to study the behavior of graph convolution in the node classification, which is defined as:
\begin{equation}
\small
    \Y = f(\G, \X),
\end{equation}where $\X\in \mathbb{R}^{N\times F}$ represent raw node feature and $F$ is the feature dimension. $\Y$ denotes the class of nodes. The task is to predict labels of unlabelled nodes. Below we present the graph convolution~\cite{GCN} and a simple concatenation, called graph concatenation.
\begin{defn}[\textbf{Graph Convolution (GConv) and Graph Concatenation (GCat)}]\small
Graph convolution is an operation between the graph matrix and node feature, which is defined as
   $\Z_{GConv}= \AX$,
where $\Z_{(\cdot)}$ represents the output, and $\A$ is the normalized adjacency matrix with self-loop, i.e., $\A=\Dsym(\I+W)\Dsym$, ($\textrm{D}$ is the degree matrix, $\textrm{I}$ is the identity matrix). Graph concatenation is written as
    $\Z_{\gcat} = \A\oplus\X$,
where $\oplus$ means the concatenate operation between matrix $\A$ and $\X$.
\label{def:gconv_gcat}
\end{defn}

\subsection{Perspective of Linear Separability }
In this subsection, we study the linear separability of applying graph convolution and graph concatenation. First, we define homophily and heterophily for our later analysis:
\begin{defn}[\textbf{\underline{Homophily} and \underline{Heterophily} in Graph}]\small
Let $\Y$ be label value, then the 2-Dirichlet energy defined below is used as global smoothness:
{\small\begin{equation*} S_{2}(\Y) =\frac{1}{2} \sum_{(ij) \in (\V,\V)} w_{i, j}[\Y(j)-\Y(i)]^{2}.
\end{equation*}} Similarly, smoothness of feature $\X$ can be defined as {\small\begin{equation*}
S_{2}(\X) =\frac{1}{2}\sum_{k} \sum_{(i,j) \in (\V,\V)} w_{i, j}[\X_{k}(j)-\X_{k}(i)]^{2},
\end{equation*}}where $k$ is the feature index. 
When $S_{2}(\Y) \text{ or } S_{2}(\X) \approx 0$, the label or feature satisfies homophily regarding the graph. Otherwise, if $S_{2}(\Y) \text{ or } S_{2}(\X) \gg 0$, the label or feature is of heterophily in the graph. 
\label{def:lable_homo_heter}
\end{defn}
Any graph can be treated as a mix of homophily and heterophily. Therefore, we investigate linear separability of graph convolution and graph concatenation under homophily and heterophily, respectively.

\begin{thm}[\textbf{Linear Separability of Graph Convolution and Graph Concatenation}]\small
Let $\X=\{\X_{homo}, \X_{heter}\}$, denoting a complete set with two types of nodes, i.e., with the homogeneous and heterogeneous neighborhood. $J$ denotes the linear separability by Fisher score of $\X$, then we have:
\begin{equation*}\footnotesize
    J^{\gcat}(\X) \geq J^{GConv}(\X),
\end{equation*}where $J^{GCat}$ and $J^{GConv}$ are Fisher scores of graph concatenation and graph convolution, respectively.
\label{thm:1}
\end{thm}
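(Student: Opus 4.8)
The plan is to reduce the comparison of the two Fisher scores to a single algebraic observation, namely that the graph-convolution representation is a \emph{fixed linear image} of the graph-concatenation representation, and then invoke monotonicity of the Fisher discriminant ratio under such linear maps. First I would make the block structure explicit. Writing the concatenated representation as $[\A \mid \X]$ with $\A \in \Rr^{N\times N}$ and $\X \in \Rr^{N\times F}$, note that
\begin{equation*}
\A\X \;=\; [\A \mid \X]\,\begin{pmatrix}\X \\ 0\end{pmatrix},
\end{equation*}
that is, $\Z_{GConv} = \Z_{\gcat}\,M$ for the fixed matrix $M=\begin{pmatrix}\X\\0\end{pmatrix}\in\Rr^{(N+F)\times F}$ (the top block multiplies the $\A$ columns and the zero block kills the $\X$ columns). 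Hence every column of the GConv output lies in the column space of the GCat output; concatenation retains both $\A$ and $\X$ as raw columns, whereas convolution keeps only this one particular linear mixture of them.

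Next I would recall the variational form of the Fisher score. Treating nodes as samples and letting $S_B^{(\cdot)},S_W^{(\cdot)}$ denote the between- and within-class scatter matrices of a representation, the optimal Fisher separability is $J=\max_{w}\,(w^\top S_B\, w)/(w^\top S_W\, w)$, i.e. the largest generalized eigenvalue of the pair $(S_B,S_W)$. Under the linear map $M$ the scatter matrices pull back as $S_B^{GConv}=M^\top S_B^{\gcat}M$ and $S_W^{GConv}=M^\top S_W^{\gcat}M$, so for every projection $u\in\Rr^{F}$ the GConv quotient evaluated at $u$ equals the GCat quotient evaluated at $w=Mu$.

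I would then close the argument by a subspace comparison. Since $\{Mu : u\in\Rr^{F}\}$ is a subspace of $\Rr^{N+F}$, maximizing the GCat Rayleigh-type quotient over this restricted set of directions can only be smaller than maximizing it over all of $\Rr^{N+F}$; therefore $J^{GConv}(\X)=\max_{u}(\ldots)\le\max_{w}(\ldots)=J^{\gcat}(\X)$, which is exactly the claimed inequality. The homophily/heterophily split in the hypothesis, with $\X=\{\X_{homo},\X_{heter}\}$, then enters not to establish the inequality but to locate where it is strict: under heterophily the neighbor-averaging in $\A\X$ cancels the class-discriminative directions that $M$ projects away, collapsing separability that concatenation preserves, so the gap $J^{\gcat}-J^{GConv}$ is largest precisely there.

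The main obstacle I anticipate is not the inequality itself but pinning down the exact definition of $J$ the authors intend, together with the degeneracy of $S_W$ when $N+F$ exceeds the sample count (making $S_W^{-1}$ ill-defined). I would handle this by phrasing everything through the generalized-eigenvalue / Moore--Penrose formulation, which keeps the quotient well defined on the relevant subspace, and by separately verifying that the additive per-feature version of the Fisher score is also monotone under appending the $\A$ columns --- via a Schur-complement bound showing that augmenting features cannot decrease $\mathrm{tr}(S_W^{-1}S_B)$. Checking that this monotonicity survives the particular normalization $\A=\Dsym(\I+W)\Dsym$, rather than relying on any special property of $\A$, is where I would spend the most care.
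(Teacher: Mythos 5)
Your proposal is correct under the variational (LDA) reading of the Fisher score, and it takes a genuinely different route from the paper's. The paper works with the scalar score $J=(\mu_{heter}-\mu_{homo})^{2}/(\sigma_{heter}+\sigma_{homo})$ on the raw feature and argues in two informal steps: (i) graph convolution is neighborhood averaging, so the two class means move toward each other while the within-class variances grow, giving $J^{GConv}<J$; (ii) concatenation keeps $\X$ intact as a sub-block, so a downstream learner can ignore the $\A$-columns and recover $J^{\gcat}=J$. Your argument replaces both steps with a single algebraic fact: $\A\X=[\A\mid\X]\,M$ with $M=(\X^{\top},0)^{\top}$ fixed across samples, so the scatter matrices pull back as $M^{\top}SM$ and the GConv Rayleigh quotient is the GCat quotient restricted to the subspace $\mathrm{range}(M)$, whence $J^{GConv}\le J^{\gcat}$. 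This buys a lot: it needs no homophily/heterophily hypothesis, it yields exactly the non-strict inequality in the statement, and it sidesteps the paper's most fragile step --- the claim that neighborhood averaging \emph{increases} within-class variances, which is generally false (averaging is a contraction and typically shrinks spread), and which is the load-bearing part of the paper's denominator comparison. What the paper's route buys instead is a heuristic account of \emph{when} the gap is strict and large (heterophily), which you correctly relegate to a closing remark rather than to the inequality itself. The one point you must nail down, and which you already flag, is the definition of $J$: your subspace argument requires $J$ to be the optimal generalized Rayleigh quotient (or the trace criterion, which is likewise monotone under compression by eigenvalue interlacing); if $J$ were instead read as a per-coordinate score of the given representation with no optimization over directions, a fixed column of $\A\X$ is a particular linear combination of the $\A$-columns and could in principle beat every individual column of $[\A\mid\X]$, and the monotonicity would not follow. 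Since the paper's own step (ii) implicitly assumes the downstream model may reweight the concatenated columns, the variational reading is the one consistent with the paper, and your argument is the rigorous version of what the paper gestures at.
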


\textit{For details of proof for Theorem \ref{thm:1}, see supplementary material.} 
Theorem \ref{thm:1}'s conclusion is based on the fact that GConv smooths the neighborhood, hence obliterating difference information, whereas GCat is adaptive to the neighborhood situation and more adaptable to unseen cases.

\subsection{Perspective of Mutual Information}
This subsection discusses the behavior of graph convolution and graph concatenation in terms of mutual information.
First, we define the way to calculate mutual information of graph convolution and graph concatenation. Then the relationship between their mutual information are derived by our case by case study.
\begin{figure*}
    \centering
    \includegraphics[width=1\textwidth]{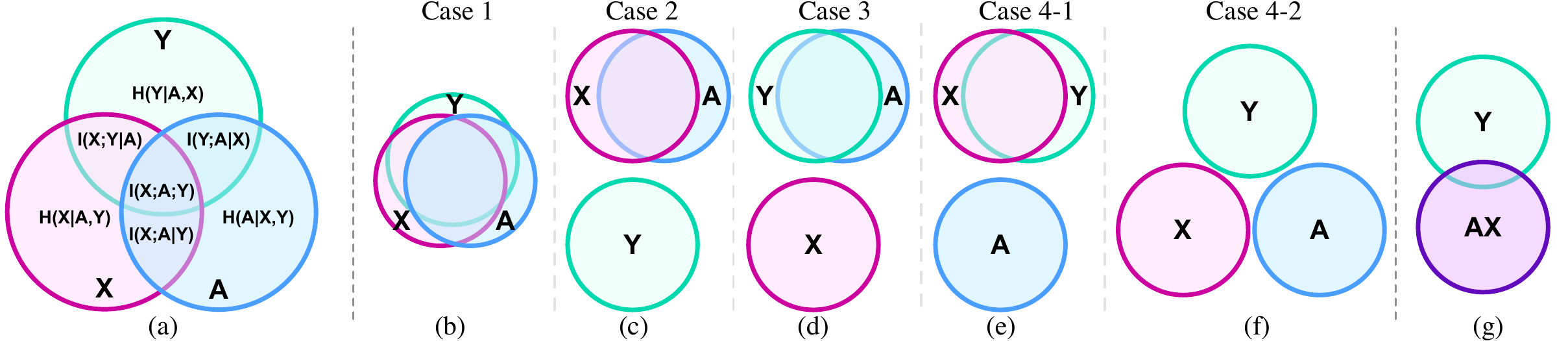}
    \caption{(a) shows a Venn diagram of information theoretic measures for three variables $\X$,$\A$, and $\Y$, represented by the lower left, lower right, and upper circles, respectively. Green/red/blue circles denote entropy of $\Y$, $\X$ and $\A$ respectively (i.e., $\ent(\Y), \ent(\X), \ent(\A)$). (b)-(f) discuss all possible cases among them, while (g) illustrates the information overlap under graph convolution.}
    \label{fig:mi_thm}
\end{figure*}
\begin{defn}[\textbf{Mutual Information of Graph Convolution and Graph Concatenation}]
Mutual information of graph convolution is defined as
   $\mi_{GConv} = \mi(\AX;\Y)$,
which means the mutual information of graph convolution, showing how much useful information that graph convolution ($\AX$) can capture $\Y$.
Mutual information of graph concatenation is defined as
    $\mi_{\gcat} = \mi(\X, \A; \Y)$,
which denotes the mutual information of graph concatenation, measuring the overlap information between the union of $(\X, \A)$ and $\Y$. Note $\A$ alone in the equations above can be represented by node embeddings that only utilize graph structure such as random walk \cite{perozzi2014deepwalk}.
\label{def:mi_gc_gcat}
\end{defn}

\begin{lem}[\textbf{Mutual Information of \textbf{\underline{Homophily} and \underline{Heterophily}}}]\small
If label $\Y$ satisfies homophily or heterophily in a graph, then $\Y$ is dependent or independent from the adjacency matrix $\A$. So the mutual information between $\Y$ and $\A$ can be written as:  
\begin{equation*}
\mi(\Y;\A)\left\{
 \begin{array}{ll}
\gg0, & \text{Y is homophily }\\
= 0, & \text{Y is heterophily}
\end{array}\right.
\end{equation*}It is easy to get:
\begin{equation*}
\ent(\Y|\A) \text{  or  } \ent(\A|\Y) \left\{
\begin{array}{ll}
=0, & \text{Y is homophily }\\
\gg0, & \text{Y is heterophily}
\end{array}\right.
\end{equation*}where $\ent$ denotes the entropy, and $\ent(\Y|\A)$ represents the conditional entropy which means the information in $\Y$ but not in $\A$ (see \cite{brillouin2013science} for the relationship between (conditional) entropy and mutual information; see all possible relationships among $\X$, $\Y$ and $\A$ in Fig. \ref{fig:syn_data}a).
\label{lem:mi_homo_heter}
\end{lem}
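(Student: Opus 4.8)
The plan is to translate the geometric notion of homophily/heterophily from Definition~\ref{def:lable_homo_heter} into the probabilistic language of statistical dependence, and then read off both displays from the single standard identity
\begin{equation*}
\mi(\Y;\A) = \ent(\Y) - \ent(\Y|\A) = \ent(\A) - \ent(\A|\Y).
\end{equation*}
Everything then reduces to pinning down $\mi(\Y;\A)$ in each regime, after which the two conditional-entropy branches fall out by rearranging this identity. So the whole argument splits into a homophily case and a heterophily case, treated separately.

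First I would handle homophily. Recall $S_2(\Y)\approx 0$ forces $w_{ij}[\Y(j)-\Y(i)]^2\approx 0$ on every edge, so adjacent nodes carry (almost) identical labels. I would argue that the community/block structure recorded in $\A$ therefore determines each node's label up to one representative per connected region, i.e.\ $\Y$ is (nearly) a deterministic function of $\A$. Consequently $\ent(\Y|\A)\approx 0$, and substituting into the identity yields $\mi(\Y;\A)=\ent(\Y)-\ent(\Y|\A)\approx \ent(\Y)\gg 0$ whenever the labeling is non-degenerate ($\ent(\Y)>0$). This simultaneously gives the $\ent(\Y|\A)=0$ branch of the second display, so the homophily side needs only the determinacy argument plus $\ent(\Y)>0$.

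The main obstacle is the heterophily case, specifically the heterophily~$\Rightarrow$~independence step, and I would be explicit that it is a \emph{modeling assumption} rather than a logical consequence of $S_2(\Y)\gg 0$ alone: a perfectly anti-homophilous labeling (every edge joins opposite classes) is in fact highly \emph{structured}, so $\A$ would still determine $\Y$ up to a global flip and $\mi(\Y;\A)$ would be large. To make the $=0$ statement precise I would restrict ``heterophily'' to a generative regime in which class assignments are drawn independently of the edge set (a random-graph null model whose connection probabilities do not depend on labels), so that $p(\Y,\A)=p(\Y)p(\A)$ and hence $\mi(\Y;\A)=0$ by definition; alternatively I would weaken the claim to ``$\mi(\Y;\A)$ is small'' and bound it by how far the edge distribution deviates from a label-independent one. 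Either route makes the statement rigorous, and plugging $\mi(\Y;\A)=0$ back into the identity immediately delivers $\ent(\Y|\A)=\ent(\Y)\gg 0$ and $\ent(\A|\Y)=\ent(\A)\gg 0$, the heterophily branch of the second display.
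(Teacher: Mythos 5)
The paper offers no proof of this lemma at all: it is asserted with ``It is easy to get,'' and the dependence/independence dichotomy is effectively taken as the \emph{meaning} of homophily/heterophily rather than derived from the Dirichlet-energy characterization in Definition~\ref{def:lable_homo_heter}. Your proposal is therefore doing strictly more work than the source. Your homophily half reconstructs the intended reasoning faithfully: small $S_2(\Y)$ means few label-discordant edges, so the community structure encoded in $\A$ (nearly) determines $\Y$, giving $\ent(\Y|\A)\approx 0$ and $\mi(\Y;\A)\approx\ent(\Y)\gg 0$ via the standard identity. One wrinkle worth stating explicitly: on a \emph{connected} graph, exact homophily ($S_2(\Y)=0$) forces a constant labeling, hence $\ent(\Y)=0$ and $\mi(\Y;\A)=0$, so the $\gg 0$ branch only makes sense in the ``small but nonzero $S_2$'' regime where labels align with a nontrivial community decomposition. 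Also note the lemma's second display claims $\ent(\Y|\A)$ \emph{or} $\ent(\A|\Y)$ vanishes under homophily; your argument establishes the former, and the latter does not follow (knowing the labels does not pin down the edge set), so the paper's disjunction should be read charitably as referring to $\ent(\Y|\A)$.

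Your treatment of the heterophily case identifies the genuine logical gap in the lemma as stated, and you are right that it cannot be closed: $S_2(\Y)\gg 0$ does not imply $p(\Y,\A)=p(\Y)p(\A)$, and the perfectly anti-homophilous (bipartite-like) labeling is a hard counterexample in which $\A$ determines $\Y$ up to a global flip, so $\mi(\Y;\A)$ is large, not zero. Your repair --- restricting ``heterophily'' to a generative regime in which labels are drawn independently of the edge set, so that independence holds by construction --- is the honest way to make the claim true, and it is in fact how the paper silently uses the lemma downstream (Cases 2--4 of Theorem~\ref{thm:2} substitute ``heterophily'' and ``no overlap with $\Y$''/``$\independent$'' interchangeably, and the paper itself has to split Case~4 into subcases 4-1 and 4-2 precisely because heterophily alone does not settle whether $\mi(\X;\Y)=0$). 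So your proposal is correct modulo the explicitly flagged modeling assumption, and it is more rigorous than the paper; the only thing to add is the connected-graph caveat on the homophily side.
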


For feature $\X$, we have exactly the same conclusion of Lemma \ref{lem:mi_homo_heter}, only by replacing $\A$ with $\X$. Thus, we have

\begin{thm}[\textbf{Relationship between the Mutual Information of Graph Convolution and Graph Concatenation}]\small
Given Def. \ref{def:mi_gc_gcat}, we have:
\begin{equation*}
    \mi_{\gcat} \geq \mi_{GConv}
\end{equation*}
\label{thm:2}
\end{thm}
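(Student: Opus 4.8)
The plan is to prove Theorem \ref{thm:2} via the data processing inequality, treating $\X$, $\A$, and $\Y$ as random variables whose joint law is induced by the graph-generating process. The central observation is that the graph convolution output $\AX$ is a \emph{deterministic} function of the pair $(\X, \A)$: it is merely their matrix product, computed without any reference to the label $\Y$. Consequently the three quantities form a Markov chain $\Y \to (\X, \A) \to \AX$, because conditioning on $(\X, \A)$ completely determines $\AX$. Everything then reduces to the fact that post-processing an observation cannot increase its mutual information with the target.

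Concretely, I would invoke the chain rule for mutual information, expanding the concatenation term while inserting the (redundant) convoluted variable:
\begin{equation*}
\mi(\X, \A; \Y) = \mi(\AX, \X, \A; \Y) = \mi(\AX; \Y) + \mi(\X, \A; \Y \mid \AX).
\end{equation*}
The first equality holds because $\AX$ carries no information beyond what $(\X, \A)$ already contains, so appending it leaves the mutual information unchanged; the second is just the chain rule. Since the residual $\mi(\X, \A; \Y \mid \AX)$ is a conditional mutual information and hence non-negative, subtracting yields $\mi_{\gcat} = \mi(\X, \A; \Y) \geq \mi(\AX; \Y) = \mi_{GConv}$, which is exactly the claim. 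This identifies the gap precisely: $\mi_{\gcat} - \mi_{GConv} = \mi(\X, \A; \Y \mid \AX)$.

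To connect this to Figure \ref{fig:mi_thm} and make the inequality meaningful rather than vacuous, I would then walk through the cases (b)--(g) using Lemma \ref{lem:mi_homo_heter}. Under heterophily, Lemma \ref{lem:mi_homo_heter} gives $\mi(\Y; \A) = 0$, yet the smoothing performed by $\AX$ can blur the feature--label overlap that GCat preserves intact; this is precisely where the gap term becomes strictly positive. Under homophily the overlap with $\A$ is retained by both operations, so the inequality degrades gracefully toward equality. This case analysis demonstrates that the separation is genuine in the heterophily regime, matching the intuition already stated after Theorem \ref{thm:1} that convolution destroys difference information.

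The hard part will be upgrading the bound from ``non-negative'' to ``strictly positive and substantial'' in the heterophily case. The inequality itself is essentially automatic from the deterministic-function argument, but its \emph{interesting} content---that matrix multiplication irreversibly collapses the distinct contributions of $\A$ and $\X$ into a single smoothed quantity while concatenation keeps them separable---requires formalizing the information loss of smoothing rather than merely citing the non-negativity of $\mi(\X, \A; \Y \mid \AX)$. Pinning down when this residual term is bounded away from zero, in terms of the homophily/heterophily split of Lemma \ref{lem:mi_homo_heter}, is where the main subtlety lies.
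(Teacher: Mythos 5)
Your proof is correct, and it takes a genuinely different---and in fact cleaner---route than the paper's. The paper never invokes the data processing inequality: it writes $\mi_{\Delta} = \ent(\Y|\AX) - \ent(\Y|\A,\X)$ and then argues case by case over the homophily/heterophily combinations of $\X$ and $\Y$ (Fig.~\ref{fig:mi_thm}b--f), using Lemma~\ref{lem:mi_homo_heter} and Venn-diagram reasoning to conclude $\mi_{\Delta}=0$ in Cases 1, 2 and 4-2 and $\mi_{\Delta}>0$ in Cases 3 and 4-1. Your observation that $\AX$ is a deterministic function of $(\X,\A)$, so that $\mi(\X,\A;\Y) = \mi(\AX;\Y) + \mi(\X,\A;\Y\mid\AX) \geq \mi(\AX;\Y)$, establishes the stated inequality in one step, with no appeal to the homophily taxonomy and no reliance on the informal ``$\gg 0$'' versus ``$=0$'' dichotomy of Lemma~\ref{lem:mi_homo_heter}; it also identifies the gap exactly as a conditional mutual information. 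What the paper's case analysis buys---and what your closing paragraph correctly flags as the remaining work---is strict positivity of the gap in the heterophily regimes, which the DPI alone does not give. Note also that your decomposition exposes a weak step in the paper's own strictness argument: in Case 3 it replaces $\ent(\Y|\A,\X)$ by $\ent(\Y|\A)$ on the grounds that $\X\independent\Y$, but that substitution requires conditional independence given $\A$, not marginal independence. In short, your route proves exactly what the theorem states, more robustly and more generally; the paper's route attempts the stronger strictness claim at the cost of rigor.
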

\begin{proof}\small
Utilizing the relationship between mutual information and conditional entropy~\cite{doi:10.1162/089976603321780272}, we have:
\begin{align*}
\mi_{\Delta} =& \mi_{GCat} - \mi_{GConv} &\\
   =&[\ent(\Y)-\ent(\Y|\AX)] - [\ent(\Y)-\ent(\Y|\A,\X)] &\\
   =&\ent(\Y|\AX)-\ent(\Y|\A,\X),
\end{align*}where $\ent(\Y|\A,\X)$ is the information of $\Y$ excluding the overlap area with $\A$ or $\X$ (see Fig. \ref{fig:mi_thm}a). Therefore, we will prove $\mi_{\Delta}>0$
\vspace{2pt}\noindent\underline{\textbf{Case 1}}: $\X$ is homophily and $\Y$ is homophily in $\G$ (Fig. \ref{fig:mi_thm}b). $\AX$ is also homophily regarding $\Y$, so we have:
\begin{equation*}\small
    \mi_{\Delta} = \ent(\Y|\AX)-\ent(\Y|\A,\X)= 0 - 0 = 0
\end{equation*}
\vspace{2pt}\noindent\underline{\textbf{Case 2}}: $\X$ is homophily and $\Y$ is heterophily in $\G$ (Fig. \ref{fig:mi_thm}c). $\A$, $\X$, and $\AX$ are all independent from $\Y$, so we have:
\begin{equation*}\small
    \mi_{\Delta} = \ent(\Y|\AX)-\ent(\Y|\A,\X)= \ent(\Y) - \ent(\Y) = 0
\end{equation*}
\vspace{2pt}\noindent\underline{\textbf{Case 3}}: $\X$ is heterophily and $\Y$ is homophily regarding the graph $\G$ (Fig. \ref{fig:mi_thm}d). Therefore, $\A$ has large overlap with $\Y$. Graph convolution ($\AX$ as shown in Fig. \ref{fig:mi_thm}g) smooths $\X$, but cannot make it completely homophily unless applying graph convolution many times (i.e., over-smoothing). Therefore, $\AX$ still has partial overlap with with $\Y$, that can be written as:
\begin{equation}\small
    \ent(\Y|\AX) = \ent(\Y)  - \mi(\Y, \AX) >0.
    \label{eq:case3}
\end{equation}
Since $\Y$ is homophily regarding the graph.
\begin{align*}\small
    \mi_{\Delta} =& \ent(\Y|\AX)-\ent(\Y|\A,\X)&\\
    =& \ent(\Y|\AX)-\ent(\Y|\A) & (\text{\footnotesize$\X \independent \Y$})\\
    >&0 +0 = 0& (\text{Eqn.\ref{eq:case3}, Lem. \ref{lem:mi_homo_heter}})
\end{align*}where $\independent$ means independence (due to that $\Y$ is heterophily and $\X$ is homophily).
\vspace{2pt}\noindent\underline{\textbf{Case 4}}: $\X$ is heterophily and $\Y$ is heterophily in $\G$. There are two subcases in terms of mutual information: (4-1) $\X$ has a overlap with $\Y$ (Fig. \ref{fig:mi_thm}e), and (4-2) $\X$ has no overlap with $\Y$ (Fig. \ref{fig:mi_thm}f). Case 4-1 is similar to case 3 by switching $\X$ and $\A$. So we have:
\begin{align*}\small
    \mi_{\Delta} =& \ent(\Y|\AX)-\ent(\Y|\A,\X)&\\
    =& \ent(\Y|\AX)-\ent(\Y|\X) & (\text{\footnotesize$\A \independent \Y$})\\
    >&0 +0=0& (\text{Eqn.\ref{eq:case3}, Lem. \ref{lem:mi_homo_heter}})
\end{align*}
As for case 4-2 (Fig. \ref{fig:mi_thm}f), none of $\X, \A, \Y$ is overlapped with one another. Therefore, $\mi_{\Delta}=0$.
Summarizing case 1-4, we conclude that $\mi_{\gcat} \geq \mi_{GConv}$.
\end{proof}
\vspace{2pt}\noindent\textbf{Time Complexity Analysis.}
Graph convolution~\cite{GCN} is the product of $\A$ and $\X$, as defined in Def. \ref{def:gconv_gcat}. Therefore, its time complexity is $\mathcal{O}(N^2F)$. For graph concatenation, the complexity is $\mathcal{O}(1)$ since it is a matrix concatenation. Therefore, there is a remarkable superiority of graph concatenation beyond graph convolution.

\section{Related Works}

\textbf{Explainable GNNs}
As an extension of deep learning, graph neural networks are also confronted with the black box issue. There are several threads of applying the explainable approach to demystify the learning process of GNNs. One topic is to learn input importance from the gradient~\cite{baldassarre2019explainability,pope2019explainability}. Another popular perspective is from sensitivity analysis, which perturbs minor component and evaluate its influence on the global level \cite{ying2019gnnexplainer,luo2020parameterized}. Similarly, ~\cite{huang2020graphlime,vu2020pgm} provide instance-level explanations, while \cite{yuan2020xgnn} offers model-level interpretation.
Different from existing works, our work seeks to quantify, in the input and output level, how much the input affects the output with linear and non-linear dependency analysis.

\noindent\textbf{Over-smoothing Issue of GNNs}
Similar to normal deep neural networks, graph convolution over-smooth the features as the number of layers increases under homophily case. To solve this issue, many recent works are proposed to mitigate the performance decrease~\cite{zhu2020beyond, bo2021beyond, li2019deepgcns, zhao2019pairnorm, oono2019graph,rong2019dropedge,huang2020tackling,chen2020simple,liu2020towards,zhou2020deeper,yang2020revisiting}. 
However, a case has been totally ignored: when graph features or labels are heterophily regarding the graph structure, which is added in the discussion throughout this paper.

\section{Experiments}
Data and code for synthetic and real-world experiments are provided \footnote{https://bitbucket.org/gconcat2021/graph-concatenation/downloads/}.
\subsection{Synthetic Experiment}
In the synthetic data, Minnesota road network is used as a graph structure for visualization. Based on this graph, $\X$ and $\Y$ are generated with homophily or heterophily. Then, the calculation of Fisher score and mutual information are performed to derive dependency relationship among the input and output. Finally, we conducted a node classification to study its connection with dependency analysis.

\vspace{2pt}\noindent\textbf{Data and Metrics.}
Specifically, cases 1, 2, 3, and 4-1 will be studied in this section with binary classification (case 4-2 is meaningless since no data is related to one another).
There are two types of data regarding the graph: homophily and heterophily, which are composed respectively: \textbf{[Homophily]}: low-frequency eigenvectors of graph Laplacian are used, and ten different eigenvectors are selected at the index of \{1, 4, 7, 10, 13, 16, 19, 22, 25, 28\}. 
\textbf{[Heterophily]}: Similarly, eigenvectors at index of \{-1, -4, -7, -10, -13, -16, -19, -22, -25, -28\} are chosen for heterophily, since they are high-frequency components. Another option is to randomly set a group of scattering points as one class, leaving the remaining nodes as the other class. Different point numbers are tested: \{10, 50, 100, 200, 300, 500, 1000, 1500, 2000, 2500\}. 
Samples for synthetic data are illustrated in Fig. \ref{fig:syn_data} in supplementary material. Note that the graph ($\A$) is fixed, so we only need to configure $\X$ and $\Y$. Detailed configuration for each case is listed below. The representaion of $\A$ in GCat is obtained by DeepWalk~\cite{perozzi2014deepwalk} with 256 dimensions.
\noindent\underline{\textbf{Case 1}}: $\X$ is set to be homophily w.r.t. $\A$, and is set to be low-frequency eigenvector. $\Y$ is set to be discretized $\X$, i.e., nodes with values that are higher than the median of $\X$ are set to one class, and those that are lower than median are set to the other class.
\noindent\underline{\textbf{Case 2}}: $\X$ is set to be homophily, and  $\Y$ is set to be heterophily with two options describe above: \texttt{(option 1)} $\Y$ with different eigenvectors for heterophily or \texttt{(option 2)} $\Y$ with different point numbers for heterophily;  \noindent\underline{\textbf{Case 3}}: $\Y$ is set to be homophily, and  $\X$ is set to be heterophily with the same two ways described above. 
\noindent\underline{\textbf{Case 4-1}}: $\X$ is set to be heterophily, and $\Y$ is set to be discretized $\X$ as shown in case 1. There are also \texttt{(option 1)} and \texttt{(option 2)} for setting $\X$.
We compare the Fisher score and mutual information of graph convolution and graph concatenation, respectively. Specifically, $J(\AX)$ (w.r.t. $\Y$) and $\I(\AX;\Y)$ are evaluated for graph convolution, and $J(\A\oplus\X)$ (w.r.t. $\Y$) and $\I(\A\oplus\X;\Y)$ are for graph concatenation. Higher value of both Fisher score and mutual information indicates better linear separability and more information overlap, and thereby a better transform.


\begin{figure}[!hbpt]
\centering
\subfigure 
{ 
\begin{tikzpicture}
\begin{axis}[
width=0.28\linewidth,
height=0.32\linewidth,
zmin=0, zmax=1.8e-3,
ticklabel style = {font=\tiny},
colormap/cool,
legend style={nodes={scale=0.65, transform shape},fill=white, fill opacity=0.7, draw opacity=1,text opacity=1}
]
\addplot3 [surf,mesh/check=false, mesh/rows=30,mesh/ordering=y varies,opacity=0.85] file {data/case_2_0_gcat_fs_max.dat};
\addlegendentry{GCat $J$};
\end{axis}
\end{tikzpicture}
}
\subfigure{
\begin{tikzpicture}
\begin{axis}[
width=0.28\linewidth,
height=0.32\linewidth,
zmin=0, zmax=1.8e-3,
colormap/cool,
legend style={nodes={scale=0.65, transform shape},fill=white, fill opacity=0.7, draw opacity=1,text opacity=1}
]
\addplot3 [surf,mesh/check=false,mesh/rows=30,mesh/ordering=y varies,opacity=0.85] file {data/case_2_0_gconv_fs.dat};
\addlegendentry{GConv $J$};
\end{axis}
\end{tikzpicture}}
\subfigure 
{ 
\begin{tikzpicture}
\begin{axis}[
width=0.28\linewidth,
height=0.32\linewidth,
zmin=0, zmax=3.5e-2,
colormap/redyellow,
legend style={nodes={scale=0.65, transform shape},fill=white, fill opacity=0.7, draw opacity=1,text opacity=1}
]
\addplot3 [surf,mesh/check=false,mesh/rows=30,mesh/ordering=y varies,opacity=0.85] file {data/case_2_0_gcat_mi_max.dat};
\addlegendentry{GCat $\I$};
\end{axis}
\end{tikzpicture}
}
\subfigure{
\begin{tikzpicture}
\begin{axis}[
width=0.28\linewidth,
height=0.32\linewidth,
zmin=0, zmax=3.5e-2,
colormap/redyellow,
legend style={nodes={scale=0.65, transform shape},fill=white, fill opacity=0.7, draw opacity=1,text opacity=1}
]
\addplot3 [surf,mesh/check=false,mesh/rows=30,mesh/ordering=y varies,opacity=0.85] file {data/case_2_0_gconv_mi.dat};
\addlegendentry{GCon $\I$};
\end{axis}
\end{tikzpicture}
}
\subfigure 
{ 
\begin{tikzpicture}
\begin{axis}[
width=0.28\linewidth,
height=0.32\linewidth,
zmin=0, zmax=6e-3,
ytick={500,1500},
colormap/cool,
legend style={nodes={scale=0.65, transform shape},fill=white, fill opacity=0.7, draw opacity=1,text opacity=1}
]
\addplot3 [surf,mesh/check=false,mesh/rows=10,mesh/ordering=y varies, opacity=0.85] file {data/case_2_1_gcat_fs_max.dat};
\addlegendentry{GCat $\I$};
\end{axis}
\end{tikzpicture}}
\subfigure{
\begin{tikzpicture}
\begin{axis}[
width=0.28\linewidth,
height=0.32\linewidth,
zmin=0, zmax=6e-3,
ytick={500,1500},
colormap/cool,
legend style={nodes={scale=0.65, transform shape},fill=white, fill opacity=0.7, draw opacity=1,text opacity=1}
]
\addplot3 [surf,mesh/check=false,mesh/rows=10,mesh/ordering=y varies, opacity=0.85] file {data/case_2_1_gconv_fs.dat};
\addlegendentry{GConv $\I$};
\end{axis}
\end{tikzpicture}}
\subfigure{
\begin{tikzpicture}
\begin{axis}[
width=0.28\linewidth,
height=0.32\linewidth,
zmin=0, zmax=3e-2,
ytick={500,1500},
colormap/redyellow,
legend style={nodes={scale=0.65, transform shape},fill=white, fill opacity=0.7, draw opacity=1,text opacity=1}
]
\addplot3 [surf,mesh/check=false,mesh/rows=10,mesh/ordering=y varies, opacity=0.85] file {data/case_2_1_gcat_mi_max.dat};
\addlegendentry{GCat $J$};
\end{axis}
\end{tikzpicture}
}
\subfigure{
\begin{tikzpicture}
\begin{axis}[
width=0.28\linewidth,
height=0.32\linewidth,
zmin=0, zmax=3e-2,
ytick={500,1500},
xlabel = X,
ylabel = Y,
zlabel = Z,
label style={font=\tiny},
colormap/redyellow,
legend style={nodes={scale=0.65, transform shape},fill=white, fill opacity=0.7, draw opacity=1,text opacity=1},
]
\addplot3 [surf,mesh/check=false,mesh/rows=10,mesh/ordering=y varies, opacity=0.85] file {data/case_2_1_gconv_mi.dat};
\addlegendentry{GConv $J$};
\end{axis}
\end{tikzpicture}
}
\caption{Case 2: \textbf{x-axis}: $\X$. \textbf{y-axis}: \textbf{[Top Four (option 1)]} $\Y$ is set to be eigenvectors (y-axis), or \textbf{[Bottom Four (option 2)]} $\Y$ is set to point numbers. \textbf{z-axis} indicates the values of $J$ or $\I$. (all axis are marked in the last sugfigure)}
\label{fig:case_2}
\end{figure}
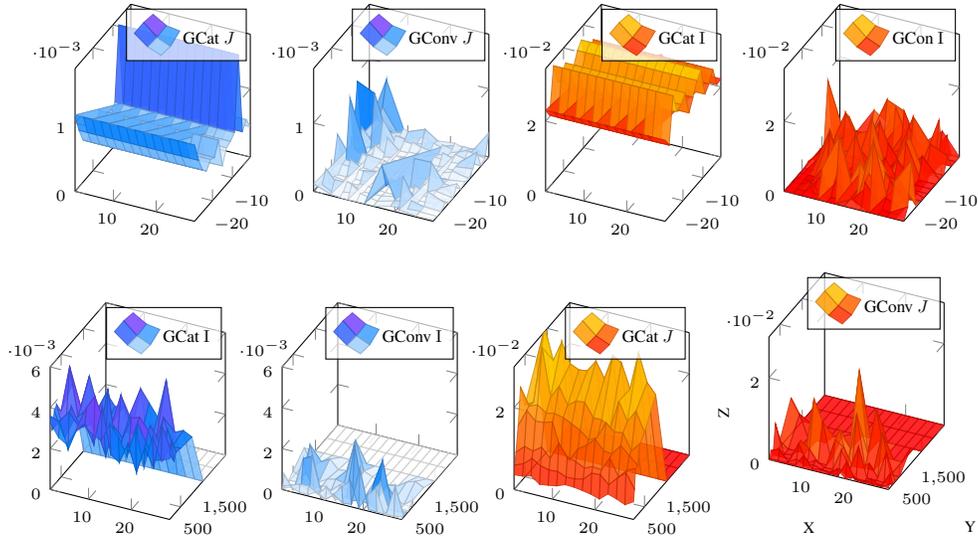
\vspace{2pt}\noindent\textbf{Results and Analysis.}
Results of \textbf{\underline{Case 1}} are shown in Fig. \ref{fig:case_1_mi} of supplementary material: GCat and GConv have almost the same Fisher scores, and their mutual information are at the same level. Note that there is only 2\% difference in the mutual information.
\textbf{\underline{Case 2}} has two options as presented in data generation. In each option, the Fisher score and mutual information are evaluated, and there are eight single evaluations in total.
There are 100 combinations between $\X$ and $\Y$. Fig. \ref{fig:case_2} shows a value surface by interpolating the 100 combinations. The top four subfigures are \texttt{option 1}, and subfigures with blue schema are the Fisher scores of GCat and GConv. It's clear that the surface of GCat is dramatically higher than that of GConv. Those subfigures with red/orange are for the mutual information, and GCat's is significantly higher than GConv's. With \texttt{option 2} (the bottom four subfigures), GCat's still has similar superiority. 
Supplementary material's Fig. \ref{fig:case_3} depicts the \textbf{\underline{Case 3}} with the same style, most GCat's surfaces are much higher except for in the third and fourth subfigures in the top row (still higher by a small margin). 
In \textbf{\underline{Case 4-1}} (Fig. \ref{fig:case_4_mi}), GCat shows slight advantage beyond GConv under the Fisher score (left column) and mutual information (right column).

\begin{figure}
\centering
\subfigure{
\begin{tikzpicture}
\begin{axis}[
legend pos=south east,
width=0.58\linewidth,
height=0.2\linewidth,
legend style={nodes={scale=0.65, transform shape},fill=white, fill opacity=0.7, draw opacity=1,text opacity=1}
]
\addplot table {data/case_4_0_gcat_fs_max.dat};
\addplot table {data/case_4_0_gconv_fs.dat};
\legend{gcat, gconv}
\end{axis}
\end{tikzpicture}}
\hfill
\subfigure{
\begin{tikzpicture}
\begin{axis}[
legend pos=south east,
width=0.58\linewidth,
height=0.2\linewidth,
legend style={nodes={scale=0.65, transform shape},fill=white, fill opacity=0.7, draw opacity=1,text opacity=1}
]
\addplot table {data/case_4_0_gcat_mi_max.dat};
\addplot table {data/case_4_0_gconv_mi.dat};
\legend{gcat, gconv}
\end{axis}
\end{tikzpicture}}
\subfigure{
\begin{tikzpicture}
\begin{axis}[
legend pos=south east,
width=0.58\linewidth,
height=0.2\linewidth,
legend style={nodes={scale=0.65, transform shape},fill=white, fill opacity=0.7, draw opacity=1,text opacity=1}
]
\addplot table {data/case_4_1_gcat_fs_max.dat};
\addplot table {data/case_4_1_gconv_fs.dat};
\legend{gcat, gconv}
\end{axis}
\end{tikzpicture}}
\hfill
\subfigure{
\begin{tikzpicture}
\begin{axis}[
legend pos=south east,
width=0.58\linewidth,
height=0.2\linewidth,
legend style={nodes={scale=0.65, transform shape},fill=white, fill opacity=0.7, draw opacity=1,text opacity=1}
]
\addplot table {data/case_4_1_gcat_mi_max.dat};
\addplot table {data/case_4_1_gconv_mi.dat};
\legend{gcat, gconv}
\end{axis}
\end{tikzpicture}}
\caption{Case 4: \textbf{(Left Column)} Fisher scores of graph concatenation and graph convolution. x-axis is eigenvector index. \textbf{(Right Column)} Mutual information of graph concatenation and graph convolution, x-axis is point number. Top row is for \texttt{option 1}, and bottom row for \texttt{option 2}.}
\label{fig:case_4_mi}
\end{figure}
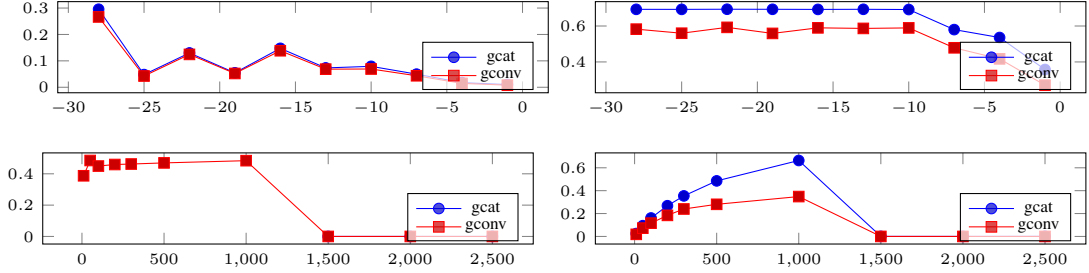

To study the relationship between Fisher score/mutual information and classification accuracy, we conducted experiments with several models based on graph convolution and graph concatenation. Specifically, 
{
\small
\begin{itemize}
    \item \textbf{GCat + LR}: Using the output features of graph concatenation, logistic regression or SVM is applied to predict the testing node.
        \item \textbf{GCat + SVM}: Using the output features of graph concatenation, SVM is applied to predict the testing node.
    \item \textbf{GConv + LR}: Using the output features of graph convolution, logistic regression or SVM is applied to predict the testing node.
        \item \textbf{GConv + SVM}: Using the output features of graph convolution, SVM is applied to predict the testing node.
            \item \textbf{GCN}: Directly apply graph convolutional networks~\cite{GCN}
\end{itemize}}

The train/test split is 60\%/40\% with shuffling, and the experiments are repeated 10 times. Fig. \ref{fig:syn_classif} shows the results: 
\textbf{\underline{Case 1}} is easy task since all variables ($\X$, $\Y$, $\A$) are highly correlated. GCat and GConv performed very good (around 90\%). 
\textbf{\underline{Case 2}} is difficult due to that $\Y$ has no overlap with $\X$ or $\A$. GCat and GConv both have lower accuracy (around 85\%) for \texttt{option 2}, and under 50\% for \texttt{option 1}. 
In  \textbf{\underline{Case 3}} (\texttt{option 1} and \texttt{option 2}), GConv shows disadvantage since graph convolution dilutes useful information of $\A$ by averaging $\A$ and $\X$, while GCat  keeps all the information and provides flexibility to the model. 
There is no big difference between GCat and GConv in \textbf{\underline{Case 4}}. This is may due to dimension number of $\A$ (256) is much larger than that of $\X$ (1) in GCat, so $\X$ was deweigthed in the learning.

\begin{figure*}
    \centering
    \label{fig:my_label}
    \begin{tikzpicture}
\begin{axis}[
    width=1\linewidth,
    height=0.34\linewidth,
    ybar,
    bar width=6pt,
    enlargelimits=0.2,
    legend style={at={(0.5,-0.3)},
      anchor=north,legend columns=-1},
    ylabel={Accuracy},
    legend style={nodes={scale=0.6, transform shape}}, 
    symbolic x coords={
        Case 1,
    Case 2/Opt 1,
    Case 2/Opt 2,    
    Case 3/Opt 1,
    Case 3/Opt 2,    
    Case 4/Opt 1,
    Case 4/Opt 2,    
    },
    xtick=data,
    nodes near coords,
    nodes near coords align={vertical},
    every node near coord/.append style={rotate=90,anchor=west,font=\tiny}
    ]
	\addplot coordinates
		{(Case 1,0.892) (Case 2/Opt 1,0.405) (Case 2/Opt 2,0.838) (Case 3/Opt 1,0.889) (Case 3/Opt 2,0.888) (Case 4/Opt 1,0.428) (Case 4/Opt 2,0.999)};
	\addplot coordinates
		{(Case 1,0.898) (Case 2/Opt 1,0.405) (Case 2/Opt 2,0.839) (Case 3/Opt 1,0.893) (Case 3/Opt 2,0.893) (Case 4/Opt 1,0.430) (Case 4/Opt 2,1.000)};
	\addplot coordinates
		{(Case 1,0.947) (Case 2/Opt 1,0.488) (Case 2/Opt 2,0.846) (Case 3/Opt 1,0.49) (Case 3/Opt 2,0.501) (Case 4/Opt 1,0.564) (Case 4/Opt 2,0.903)};
	\addplot coordinates
		{(Case 1,0.899) (Case 2/Opt 1,0.489) (Case 2/Opt 2,0.846) (Case 3/Opt 1,0.49) (Case 3/Opt 2,0.496) (Case 4/Opt 1,0.492) (Case 4/Opt 2,0.896)};
	\addplot coordinates
		{(Case 1,0.488) (Case 2/Opt 1,0.49) (Case 2/Opt 2,0.837) (Case 3/Opt 1,0.488) (Case 3/Opt 2,0.502) (Case 4/Opt 1,0.49) (Case 4/Opt 2,0.942)};
\legend{
GCat + LR , GCat + SVM   , GCon + LR  , GConv + SVM , GCN                      
    }
\end{axis}
\end{tikzpicture}
\caption{Accuracy of node classification with different methods}
\label{fig:syn_classif}
\end{figure*}
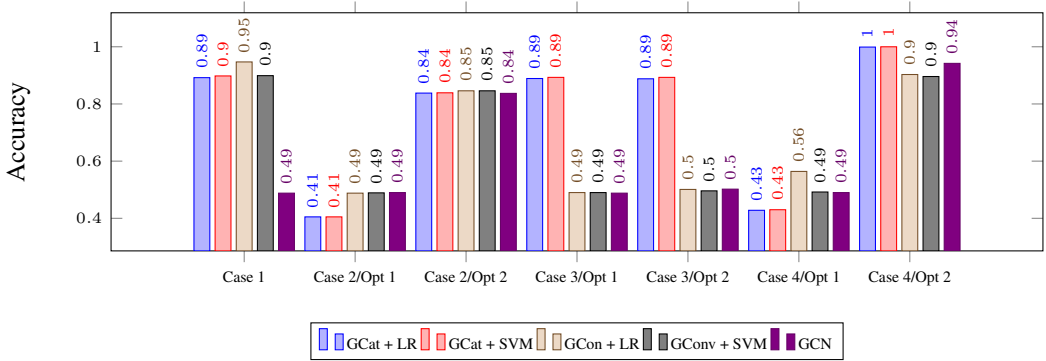

For the efficiency evaluation, graph convolution and graph concatenation are performed 100 times. The mean runtime is $0.0026\pm 0.0001$ seconds for graph convolution, and
$0.0003\pm 0.0001$ seconds for graph concatenation. So there is graph concatenation is faster than graph convolution by approximately 7.6 times.

\subsection{Real-world Experiment}
Besides the synthetic data, real-world datasets are utilized to evaluate the graph concatenation method on both prediction accuracy and training time.

\vspace{2pt}\noindent\textbf{Datasets.}
As in \cite{yang2016revisiting}, we use three citation network datasets: Cora, CiteSeer, and PubMed for evaluation. In these datasets, nodes represent documents and edges represent citation links. Each node has a corresponding feature vector capturing the bag-of-words of the document and a class label indicating its related field. 
 
\vspace{2pt}\noindent\textbf{Baselines.}
We include several GNN-based methods as baselines. These methods differ from graph concatenation in that they all utilize graph convolution or feature propagation as part of their model training process.
{\small
\begin{itemize}
    \item \textbf{Graph Convolutional Network (GCN)} is a first-order approximation of spectral graph convolutions proposed by \cite{GCN}.

    \item \textbf{Graph Isomorphism Network (GIN)} \cite{xu2018powerful} propose a GNN as powerful as  the Weisfeiler-Lehman graph isomorphism test under the neighborhood aggregation framework.

    \item \textbf{Graph Sampling and Aggregation (GraphSAGE)} \cite{hamilton2017inductive} generates embeddings by neighborhood sampling and aggregation and realizes an implementation of GCN. 

    \item \textbf{Simple Graph Convolution (SGC)} \cite{wu2019simplifying} removes the nonlinearities between GCN layers and eventually simplifies the function into a single linear transformation.

    \item \textbf{Convolutional Neural Networks on Graphs (ChebNet)} \cite{1606.09375} achieves a higher-order graph convolution operation by modeling neighborhoods with combined Chebyshev polynomials. 
\end{itemize}
}
\vspace{2pt}\noindent\textbf{Experimental Setup.}
In our experiment, we assume that we have full knowledge of features of all nodes as well as information of the entire graph. Part of the nodes with known labels are used to train the models while the other part is treated as if their labels are unknown for model testing. All baseline models adopt similar settings such that they run 200 epochs using Adam optimizer \cite{kingma2014adam} to reach the finalized parameters. During the training process, the dropout technique \cite{srivastava2014dropout} is adopted to prevent overfitting. Each experiment, on either one of the baselines or graph concatenation, is repeated ten times with the same settings. In each time, the datasets are divided into train set and test set as 60\%/40\% with shuffling. Means of prediction accuracy and training time, as well as their standard deviations, are reported as the measurements of each model's effectiveness and efficiency. 
For graph concatenation model, we adopt Deepwalk \cite{perozzi2014deepwalk} to generate graph embeddings which are later concatenated with the node features. The data matrix after concatenation is fitted into a Logistic Regression model with L2 regularization. When considering the training time cost, we only report the time spent on matrix concatenation and the logistic regression, which is performed on the concatenated matrix, based on the assumption that the graph embedding is available for the node classification task. This assumption holds under two general circumstances. The first one is that when a series of recurring prediction tasks are carried out on a static graph structure with changing node features, graph embedding can be reused after being initially learnt. The second circumstance is that even when a few new nodes are added into a graph whose embedding is known, the time cost of updating the graph embedding based on previous knowledge is way less than the time cost of the initial learning. 

\vspace{2pt}\noindent\textbf{Results.}
The results of node classification accuracy are summarized in Table ~\ref{tab:2}. Comparing to the five baselines, which are the common graph neural network methods for node classification, graph concatenation model presents a competitive predicting power. On every dataset of the three, it has an above-average prediction accuracy and exceeds the performance of more than half of the baselines. 
We further consider the performances of the models averaging on all three datasets by calculating the means of ranks each method achieves on the datasets. Results show that the graph concatenation model shares the best average rank with vanilla GCN, indicating that both of them have consistent performance across the datasets. Note that these two methods not only possess the same average rank but also have a similar prediction accuracy on each dataset. This finding suggests that graph convolution and graph concatenation provide similar proportions of mutual information regarding the node label.
\begin{table}[!htpb]
\centering
\scalebox{0.9}{
\begin{tabular}{l|rrr|c}
\hline
Method                  & Cora                  & CiteSeer              & PubMed                & Avg. Rank \\\hline
GCN                     & $86.65\pm 0.82$          & $73.12\pm 1.05$          & $87.22\pm 0.35$          & $2.33$         \\
ChebNet                 & $84.51\pm 0.95$          & $71.62\pm 1.03$          & $88.20\pm 0.27$          & $3.67$         \\
GIN                     & $82.47\pm 3.67$          & $70.41\pm 1.03$          & $84.73\pm 0.65$          & $5.33$         \\
SGC                     & $85.71\pm 0.84$          & $75.70\pm 0.80$          & $65.54\pm 1.24$          & $3.67$         \\
GraphSAGE               & $85.75\pm 0.91$          & $73.42\pm 0.66$          & $84.66\pm 0.38$          & $3.67$         \\\hline
\textbf{GCat}        & $85.89\pm 0.74$          & $73.72\pm 0.83$          & $86.60\pm 0.34$          & \textbf{$2.33$}   \\\hline     
\end{tabular}}
\caption{\label{tab:2}Mean and standard deviation of prediction accuracy for all methods on all datasets.}
\end{table}
The training time for each model is presented in Table ~\ref{tab:3}. We can see that graph concatenation is significantly faster than all the baseline methods except for SGC, which is actually not a graph convolution method. As clarified in the Experimental Setup section above, the training time reported here for graph concatenation refers to the time for concatenating two matrices and training the Logistic Regression only. Table ~\ref{tab:3} also suggests that the gap between the training times of graph concatenation and graph convolution is not as great as expected according to time complexity analysis and the synthetic experiment. This is because the training time of graph concatenation reported here is mostly taken by the process of fitting the logistic regression model. Concatenating the matrices only takes a very small proportion of the reported time, 2.35\% on the PubMed dataset for instance.

\begin{table}[!htpb]
\centering
\scalebox{0.9}{
\begin{tabular}{l|rrr}
\hline
Method                  & Cora            & CiteSeer       & PubMed          \\\hline
GCN                     & $496.16 \pm 1.93$   & $499.97 \pm 3.42$  & $916.17 \pm 0.90$   \\
ChebNet                 & $1692.16 \pm 35.28$ & $3216.94 \pm 5.62$ & $4793.49 \pm 6.23$  \\
GIN                     & $755.22 \pm 26.73$  & $953.71 \pm 2.38$  & $3737.39 \pm 22.89$ \\
SGC                     & $203.72 \pm 1.31$   & $226.16 \pm 1.39$  & $439.24 \pm 1.73$   \\
GraphSAGE               & $436.14 \pm 9.02$   & $608.13 \pm 1.18$  & $1240.90 \pm 3.17$  \\\hline
\textbf{GCat} & $155.07 \pm 18.45$  & $458.83 \pm 33.17$ & $821.76 \pm 43.52$ \\\hline
\end{tabular}}
\caption{\label{tab:3}Mean and standard deviation of training time (ms) for all methods on all datasets.}
\end{table}

The prediction performance and the time efficiency of the graph concatenation model can be further improved with either a better classification model fitted on the concatenated matrix or better techniques to generate a graph embedding that can better expose the information of the graph structure. As for now, we can see that the current model adopted in the real-world experiment has already shown competitive performance on prediction as well as a significantly shorter training time comparing to the graph convolution methods.
\section{Conclusion}
In this paper, we studied graph convolution from the perspective of feature selection, quantifying linear and non-linear dependency of the input and output in the node classification task. We first investigate their linear separability with the Fisher score (F-test). Considering every situation can be categorized into homophily and heterophily, their Fisher scores are assessed, and theoretical analysis shows that graph concatenation has a constant advantage over graph convolution. Further, the non-linear dependency is analyzed with mutual information. Similarly, all possible overlap cases are explored, and our inference explicitly confirms the superiority of graph concatenation. To understand their behavior pattern, synthetic data experiments are conducted, where we perform the calculations of the Fisher score and mutual information. The results are consistent with our theoretical study. The corresponding classification is conducted to reveal its possible relationship with Fisher score or mutual information. In experiments on real-world data, the results demonstrate that graph concatenation is comparable with the state-of-the-art GConv-based methods, but with higher efficiency.
{\small
\bibliography{z}
\bibliographystyle{plainnat}}

\newpage

\section*{Appendix}

\subsection*{Proof for Theorem \ref{thm:1}}

\begin{proof}
Treating graph convolution and graph concatenation as pre-processing in the Fisher's linear discriminant~\cite{fisher1936use}, the better transform should make nodes representations of different classes (i.e., $\X_{heter}$, $\X_{homo}$) more distinguishable. 
The effectiveness of a transform can be evaluated by linear separability with the Fisher score, which is defined as:
\begin{equation}
J = J\left(\X_{heter}, \X_{homo}\right)=\frac{\left({\mu}_{heter}-{\mu}_{homo}\right)^{2}}{{\sigma}_{heter}+{\sigma}_{homo}},
\label{eq:ori_fisher}
\end{equation}where $\mu$ and $\sigma$ are mean and variance. $J$ in Eqn. \ref{eq:ori_fisher} can be treated as linear separability on raw data without any transform.
The value of heterogeneous neighborhood is significantly smaller than homogeneous neighborhood, i.e., $\mu_{heter}\ll \mu_{homo}$ or $\mu_{heter}\gg \mu_{homo}$. By Def. \ref{def:lable_homo_heter}
, the variance of heterogeneous neighborhood is much smaller than homogeneous neighborhood, i.e., $\sigma_{homo} \ll \sigma_{heter}$.
Formally, Fisher score of graph convolution can be written as:
\begin{equation*}\footnotesize
J^{GConv}=J\left(\X_{heter}^{\gcat}, \X_{homo}^{\gcat}\right)=\frac{\left({\mu}_{heter}^{\gcat}-{\mu}_{homo}^{\gcat}\right)^{2}}{\sigma_{heter}^{\gcat}+\sigma_{homo}^{\gcat}},
\end{equation*}where superscript $(\cdot)^{GConv}$ means data after applying graph convolution. Graph convolution equals to average one node and its neighbors~\cite{li2018deeper}, so the mean values of heterogeneous and homogeneous neighborhood move to each other, and their different become smaller after applying graph concatenation, i.e., $\left({\mu}_{heter}^{GConv}-{\mu}_{homo}^{GConv}\right)^{2} < \left({\mu}_{heter}-{\mu}_{homo}\right)^{2}$. On average, part of each group move closer to each other, so scatter range of both of two group increases and thereby their variances increase, i.e., ${\sigma}_{heter}< \sigma_{heter}^{GConv}$, and $ {\sigma}_{homo} <\sigma_{homo}^{GConv}$. Therefore, $\sigma_{heter}^{GConv}+\sigma_{heter}^{GConv} > {\sigma}_{heter}+{\sigma}_{homo}$, and we have:
\begin{equation}\small
    J^{GConv}= \frac{\left({\mu}_{heter}^{GConv}-{\mu}_{homo}^{GConv}\right)^{2}}{\sigma_{heter}^{GConv}+\sigma_{homo}^{GConv}}<\frac{\left({\mu}_{heter}-{\mu}_{homo}\right)^{2}}{{\sigma}_{heter}+{\sigma}_{homo}}= J
    \label{eq:j_gc}
\end{equation}
Similarly, graph concatenation is evaluated with Fisher score:

\textbf{(1) For homogeneous neighborhood}, applying graph concatenation on homogeneous neighborhood dose not change any much of either $\A$ and $\X$ since $\X$ is homogeneous regarding the graph or $\A$.
Therefore, the mean and variance of homogeneous neighborhood does not change:  $\small \mu_{homo}^{GCat}= \mu_{homo}$, $\sigma_{homo}^{GCat}= \sigma_{homo}$.

\textbf{(2) For heterogeneous neighborhood}, graph concatenation provides information from $\A$ and $\X$ by their concatenation, which is markedly flexible than the original feature $\X$. The downstream algorithm can tune the weights between $\X$ and $\A$, using $\X$ only to fit the label. Therefore, $J^{\gcat}= J$, since both calculate the Fisher scores on $\X$. Combining Eqn.\ref{eq:j_gc}, we have:
{\small
\begin{equation*}
    J^{\gcat} = J > J^{GConv},
\end{equation*}}which means node representations filtered by graph concatenation is more linear separable than that by graph convolution.
\end{proof}


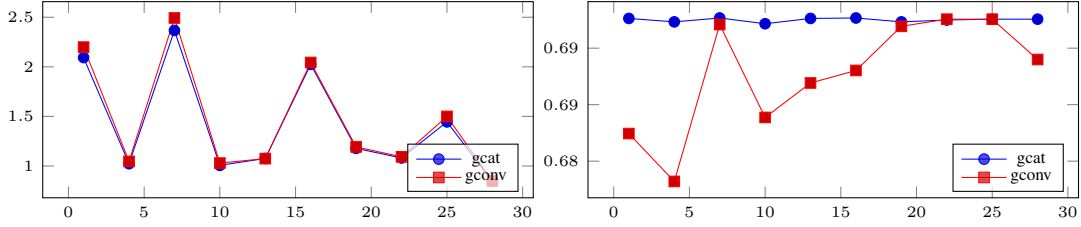
\begin{figure}
\centering
\subfigure{
\begin{tikzpicture}
\begin{axis}[
legend pos=south east,
width=0.58\linewidth,
height=0.3\linewidth,
legend style={nodes={scale=0.65, transform shape},fill=white, fill opacity=0.7, draw opacity=1,text opacity=1}
]
\addplot table {data/case_1_gcat_fs_max.dat};
\addplot table {data/case_1_gconv_fs.dat};
\legend{gcat, gconv}
\end{axis}
\end{tikzpicture}}
\hfill
\subfigure{
\begin{tikzpicture}
\begin{axis}[
legend pos=south east,
width=0.58\linewidth,
height=0.3\linewidth,
legend style={nodes={scale=0.65, transform shape},fill=white, fill opacity=0.7, draw opacity=1,text opacity=1}
]
\addplot table {data/case_1_gcat_mi_max.dat};
\addplot table {data/case_1_gconv_mi.dat};
\legend{gcat, gconv}
\end{axis}
\end{tikzpicture}}
\caption{Case 1: \textbf{(Left)} Fisher scores of graph concatenation and graph convolution, x-axis is eigenvector index. \textbf{(Right)} Mutual information of graph concatenation and graph convolution, x-axis is eigenvector index. }
\label{fig:case_1_mi}
\end{figure}

\begin{figure}[!hbpt]
\centering
\subfigure { 
\begin{tikzpicture}
\begin{axis}[
width=0.29\linewidth,
height=0.32\linewidth,
zmin=0, zmax=0.15,
colormap/cool,
legend style={nodes={scale=0.65, transform shape},fill=white, fill opacity=0.7, draw opacity=1,text opacity=1}
]
\addplot3 [surf,mesh/rows=20,mesh/ordering=y varies,opacity=0.85] file {data/case_3_0_gcat_fs_max.dat};
\addlegendentry{GCat $J$};
\end{axis}
\end{tikzpicture}}
\subfigure{
\begin{tikzpicture}
\begin{axis}[
width=0.29\linewidth,
height=0.32\linewidth,
zmin=0, zmax=0.15,
colormap/cool,
legend style={nodes={scale=0.65, transform shape},fill=white, fill opacity=0.7, draw opacity=1,text opacity=1}
]
\addplot3 [surf,mesh/check=false,mesh/rows=30,mesh/ordering=y varies,opacity=0.85] file {data/case_3_0_gconv_fs.dat};
\addlegendentry{GConv $J$};
\end{axis}
\end{tikzpicture}}
\subfigure{ 
\begin{tikzpicture}
\begin{axis}[
width=0.29\linewidth,
height=0.32\linewidth,
colormap/redyellow,
legend style={nodes={scale=0.65, transform shape},fill=white, fill opacity=0.7, draw opacity=1,text opacity=1}
]
\addplot3 [surf,mesh/check=false,mesh/rows=30,mesh/ordering=y varies,opacity=0.85] file {data/case_3_0_gcat_mi_max.dat};
\addlegendentry{GCat $\I$};
\end{axis}
\end{tikzpicture}}
\subfigure{
\begin{tikzpicture}
\begin{axis}[
width=0.29\linewidth,
height=0.32\linewidth,
colormap/redyellow,
legend style={nodes={scale=0.65, transform shape},fill=white, fill opacity=0.7, draw opacity=1,text opacity=1}
]
\addplot3 [surf,mesh/check=false,mesh/rows=30,mesh/ordering=y varies,opacity=0.85] file {data/case_3_0_gconv_mi.dat};
\addlegendentry{GConv $\I$};
\end{axis}
\end{tikzpicture}}
\subfigure { 
\begin{tikzpicture}
\begin{axis}[
width=0.29\linewidth,
height=0.32\linewidth,
zmin=0, zmax=0.15,
ytick={500,1500},
colormap/cool,
legend style={nodes={scale=0.65, transform shape},fill=white, fill opacity=0.7, draw opacity=1,text opacity=1}
]
\addplot3 [surf,mesh/check=false,mesh/rows=10,mesh/ordering=y varies, opacity=0.85] file {data/case_3_1_gcat_fs_max.dat};
\addlegendentry{GCat $J$};
\end{axis}
\end{tikzpicture}}
\subfigure{
\begin{tikzpicture}
\begin{axis}[
width=0.29\linewidth,
height=0.32\linewidth,
zmin=0, zmax=0.15,
ytick={500,1500},
colormap/cool,
legend style={nodes={scale=0.65, transform shape},fill=white, fill opacity=0.7, draw opacity=1,text opacity=1}
]
\addplot3 [surf,mesh/check=false,mesh/rows=10,mesh/ordering=y varies, opacity=0.85] file {data/case_3_1_gconv_fs.dat};
\addlegendentry{GConv $J$};
\end{axis}
\end{tikzpicture}}
\subfigure{
\begin{tikzpicture}
\begin{axis}[
width=0.29\linewidth,
height=0.32\linewidth,
zmin=0, zmax=8e-2,
ytick={500,1500},
colormap/redyellow,
legend style={nodes={scale=0.65, transform shape},fill=white, fill opacity=0.7, draw opacity=1,text opacity=1}
]
\addplot3 [surf,mesh/check=false,mesh/rows=10,mesh/ordering=y varies, opacity=0.85] file {data/case_3_1_gcat_mi_max.dat};
\addlegendentry{GCat $\I$};
\end{axis}
\end{tikzpicture}}
\subfigure{
\begin{tikzpicture}
\begin{axis}[
width=0.29\linewidth,
height=0.32\linewidth,
zmin=0, zmax=8e-2,
ytick={500,1500},
xlabel = X,
ylabel = Y,
zlabel = Z,
label style={font=\tiny},
colormap/redyellow,
legend style={nodes={scale=0.65, transform shape},fill=white, fill opacity=0.7, draw opacity=1,text opacity=1}
]
\addplot3 [surf,mesh/check=false,mesh/rows=10,mesh/ordering=y varies, opacity=0.85] file {data/case_3_1_gconv_mi.dat};
\addlegendentry{GConv $\I$};
\end{axis}
\end{tikzpicture}}
\caption{Case 3: \textbf{x-axis}: $\Y$. \textbf{y-axis}: \textbf{[Top Four (option 1)]} $\X$ is set to be eigenvectors (y-axis), or \textbf{[Bottom Four (option 2)]} $\X$ is set to point numbers. \textbf{z-axis} indicates the values of $J$ or $\I$. (all axis are marked in the last sugfigure)}
\label{fig:case_3}
\end{figure}

\begin{figure}
    \centering
    \includegraphics[width=0.7\linewidth]{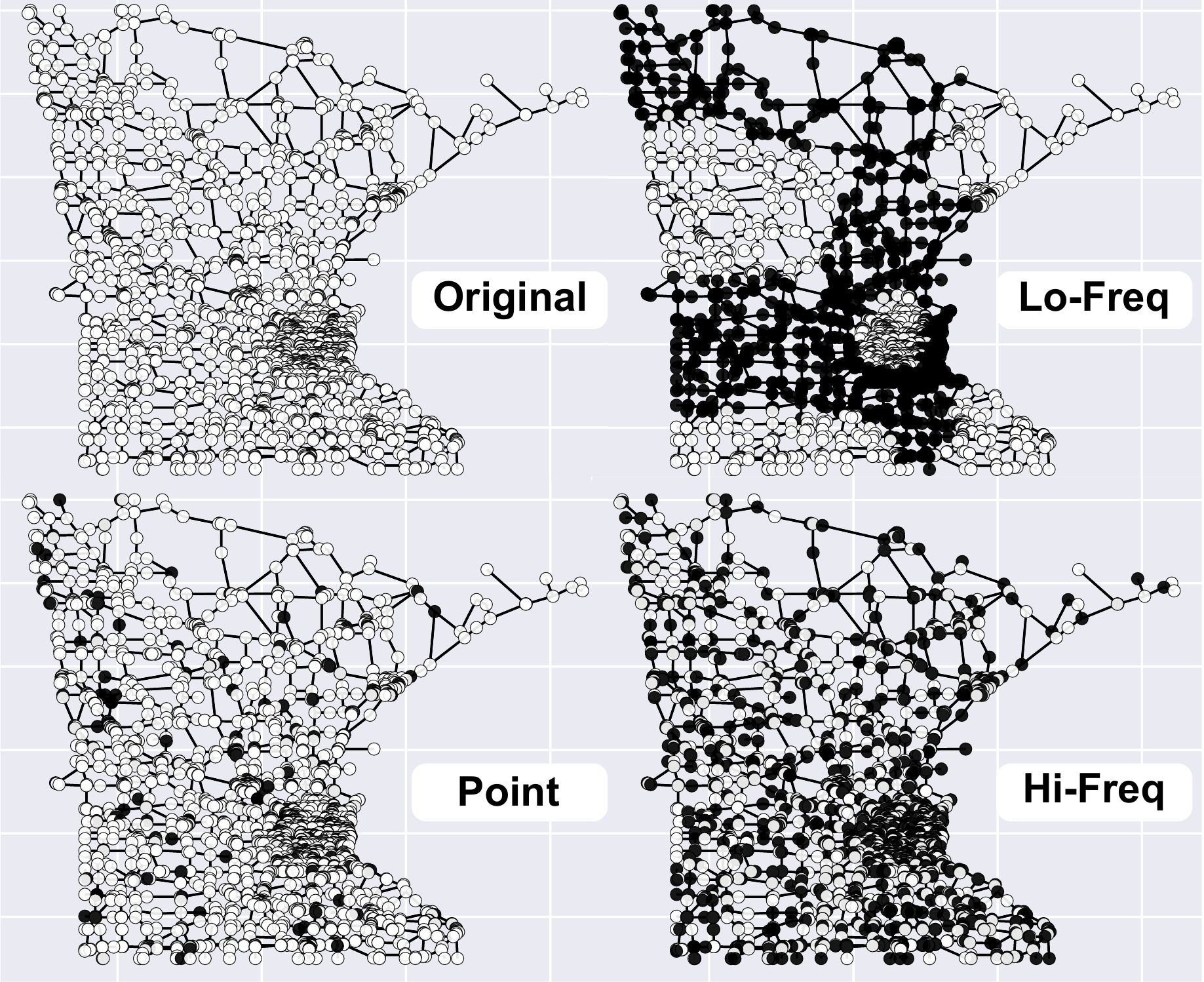}
    \caption{Synthetic data generation: \textbf{(Original)} the Minnesota road network; \textbf{(Lo-Freq)} Low frequency component of the graph Laplacian as homophily label;  \textbf{(Hi-Freq)} High frequency component of the graph Laplacian as heterophily label; \textbf{(Point)} Scattering point as heterophily label;}
    \label{fig:syn_data}
\end{figure}

\end{document}